\documentclass{article}

\usepackage[preprint]{neurips_2026}

\usepackage{graphicx}
\usepackage{booktabs}
\usepackage{mathtools}
\usepackage{amsmath}
\usepackage{amssymb}
\usepackage{amsthm}          
\newtheorem{proposition}{Proposition}
\usepackage{algorithm}
\usepackage{algorithmic}
\usepackage{multirow}
\usepackage{xcolor}
\usepackage{hyperref}
\usepackage{cleveref}       

\newcommand{\method}{FreshPER}

\begin{document}

\title{Freshness-Aware Prioritized Experience Replay \\ for LLM/VLM Reinforcement Learning}

\author{
  \textbf{Weiyu Ma}\textsuperscript{1} \quad
  \textbf{Yongcheng Zeng}\textsuperscript{2} \quad
  \textbf{Yan Song}\textsuperscript{3} \quad
  \textbf{Xinyu Cui}\textsuperscript{2} \quad
  \textbf{Jian Zhao}\textsuperscript{4} \\
  \textbf{Xuhui Liu}\textsuperscript{1} \quad
  \textbf{Mohamed Elhoseiny}\textsuperscript{1}\thanks{Corresponding author.} \\[6pt]
  \textsuperscript{1}King Abdullah University of Science and Technology (KAUST) \\
  \textsuperscript{2}Chinese Academy of Sciences, Institute of Automation (CASIA) \\
  \textsuperscript{3}AI Centre, Department of Computer Science, University College London \\
  \textsuperscript{4}Zhongguancun Institute of Artificial Intelligence \\[4pt]
  \texttt{weiyu.ma@kaust.edu.sa} \quad \texttt{mohamed.elhoseiny@kaust.edu.sa} \\
}

\maketitle

\begin{abstract}
Reinforcement Learning (RL) has achieved impressive success in post-training Large Language Models (LLMs) and Vision-Language Models (VLMs), with on-policy algorithms such as PPO, GRPO, and REINFORCE++ serving as the dominant paradigm. 
However, these methods discard all collected trajectories after a single gradient update, resulting in poor sample efficiency, particularly wasteful for agentic tasks where multi-turn environment interactions are expensive. 
While Experience Replay drives sample efficiency in classic RL by allowing agents to reuse past trajectories and prioritize informative ones, directly applying Prioritized Experience Replay (PER) to LLMs fails. The rapid policy evolution of billion-parameter models renders stored priorities stale, causing old high-priority trajectories to dominate sampling long after they have become uninformative.
We propose \method{}, which addresses this \emph{priority staleness} problem by augmenting any PER-based priority with a multiplicative exponential age decay grounded in effective sample size analysis. To the best of our knowledge, \method{} is the first work to successfully apply PER to LLM/VLM reinforcement learning. We evaluate on eight multi-step agentic, reasoning, and math competition tasks with 0.5B, 3B, and 7B models. \method{} significantly outperforms on-policy baselines, achieving +46\% on NQ Search, +367\% on Sokoban, and +133\% on VLM FrozenLake, while standard PER without age decay consistently degrades performance. Our code is publicly available at \url{https://github.com/Vision-CAIR/Freshness-Aware-PER}.
\end{abstract}

\section{Introduction}
\label{sec:intro}
\begin{figure}[tb]
    \centering
    \includegraphics[width=\linewidth,trim=70 30 70 30,clip]{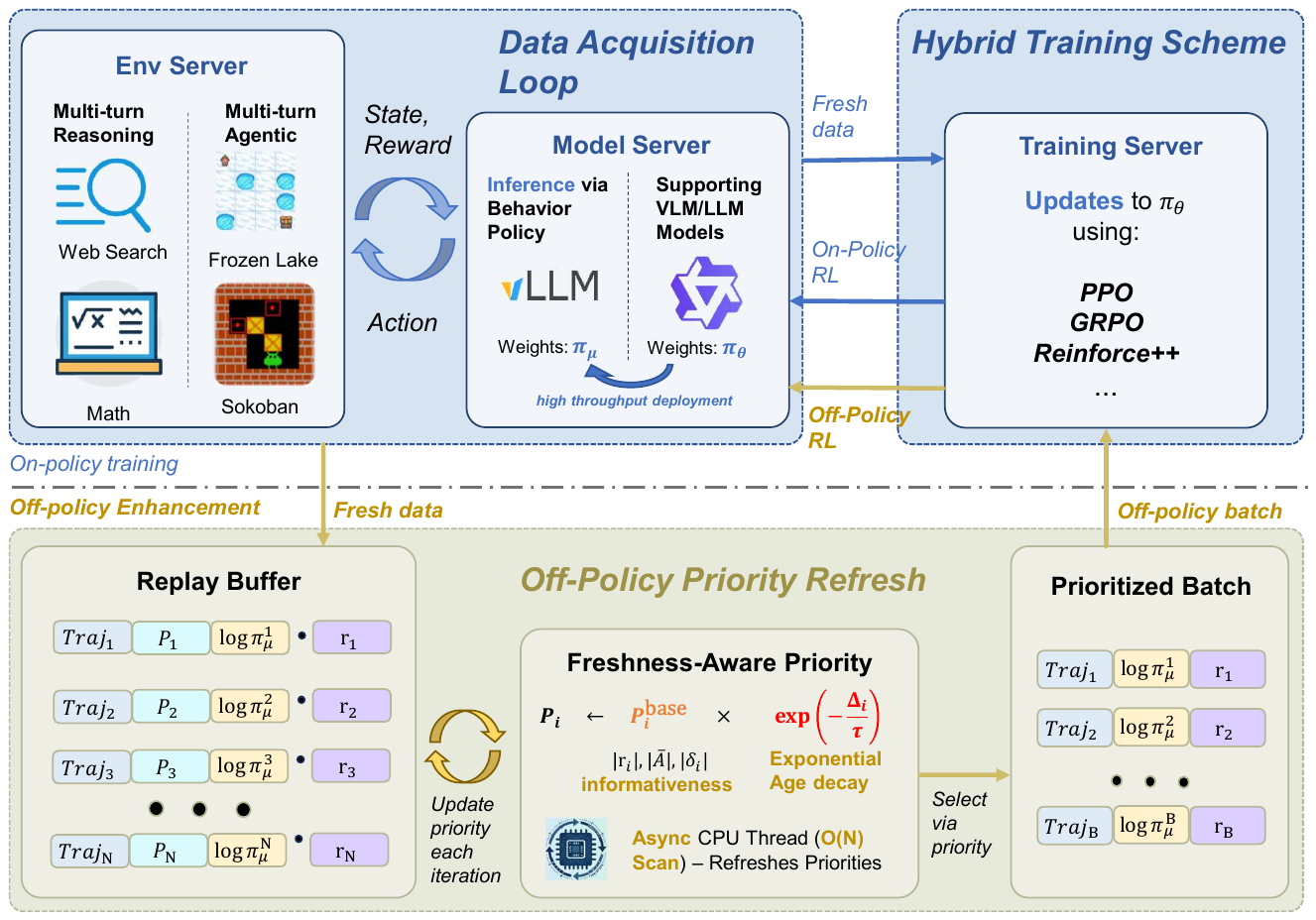}
    \caption{Overview of the \method{} training pipeline. \textbf{Top}: On-policy loop---the behavior policy $\pi_\mu$ (vLLM inference) interacts with agentic environments, and the current policy $\pi_\theta$ (DeepSpeed training) is updated via policy gradient on fresh data. \textbf{Bottom}: Off-policy loop---trajectories with their behavior log-probs and rewards are stored in the replay buffer on the CPU controller. An asynchronous thread refreshes priorities via $p_i \leftarrow p_i^{\text{base}} \cdot \exp(-\Delta_i/\tau)$, and prioritized batches are sampled for additional off-policy training.}
    \label{fig:pipeline}
\end{figure}
Reinforcement learning (RL) has become a transformative technique for large language models (LLMs). RLHF~\citep{ouyang2022training} played a central role in producing ChatGPT, demonstrating that RL-based post-training can dramatically improve the usability and safety of LLMs. More recently, OpenAI o1~\citep{openai2024o1} and DeepSeek-R1~\citep{deepseekr1} showed that RL can unlock advanced reasoning capabilities, achieving expert-level performance on mathematics and coding benchmarks. These successes have been powered by on-policy policy gradient algorithms---PPO~\citep{schulman2017ppo}, GRPO~\citep{shao2024deepseekmath}, and REINFORCE++~\citep{hu2025reinforce++}---which remain the dominant training paradigm.

A particularly exciting frontier is \emph{agentic RL}, where LLMs and vision-language models (VLMs) interact with external environments over multiple turns: searching the web~\citep{jin2025searchr1}, executing code~\citep{wei2025swerl}, calling tools~\citep{schick2023toolformer}, and navigating visual environments~\citep{driess2023palme}. Unlike single-turn preference alignment, agentic RL requires the model to take sequential actions and receive feedback from a live environment, bringing LLM training much closer to the classical RL paradigm. This shift also introduces a critical new challenge: \emph{environment interactions are very expensive}.

Consider training a search agent with REINFORCE++ on a retrieval-augmented QA task~\citep{jin2025searchr1}. Each prompt generates multiple rollout trajectories (e.g., 8 per prompt), and each trajectory involves up to 5 search turns. For a batch of 128 prompts, this yields over 5{,}000 retrieval calls per iteration, each requiring embedding computation and vector index lookup on dedicated hardware. The rollout stage alone dominates training time, often exceeding 70\% of the total wall-clock cost~\citep{rollflash2025}. Yet on-policy algorithms use these expensive trajectories for a single gradient update and then discard them entirely (\cref{fig:on_policy_llm}).

\begin{figure}[tb]
    \centering
    \includegraphics[width=0.85\linewidth]{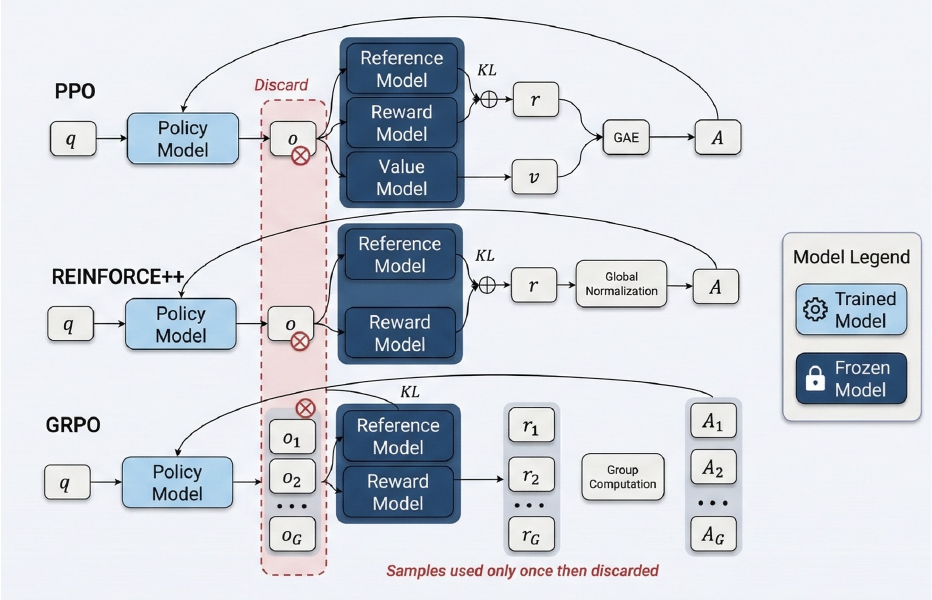}
    \caption{On-policy LLM RL algorithms (PPO, REINFORCE++, GRPO) use each trajectory for a single gradient update and then discard it ({\color{red}$\otimes$}), regardless of its learning potential.}
    \label{fig:on_policy_llm}
\end{figure}

In classical RL, \emph{experience replay}~\citep{lin1992self,mnih2015human} and its prioritized variant PER~\citep{schaul2016per} are the standard solutions to this problem, enabling agents to reuse past experiences and prioritize the most informative ones. However, directly applying PER to LLM RL fails. The core issue is \emph{priority staleness}: LLM policies evolve rapidly due to large gradient updates on long token sequences, causing old high-priority trajectories to dominate sampling long after they have become uninformative or even detrimental. Standard PER has no mechanism to account for this temporal degradation.

We present \method{}, which addresses priority staleness by augmenting any PER base priority with a multiplicative exponential age decay, directly motivated by the exponential decay of effective sample size (ESS) as the policy diverges from the behavior policy. This simple mechanism ensures that even the highest-priority old trajectory is eventually deprioritized below a fresh trajectory of moderate priority, while preserving the informativeness-driven sampling of standard PER.

The main contributions of this work are summarized as follows:
\begin{itemize}
    \item To the best of our knowledge, we are the first to successfully apply PER to LLM/VLM RL. We identify \emph{priority staleness} as a key failure mode and propose freshness-aware age decay grounded in importance sampling theory. 
    \item We implement a complete off-policy training pipeline with trajectory-level replay, integrated into the ROLL framework~\citep{wang2025roll}.
    \item We demonstrate consistent improvements across eight environments with 0.5B, 3B, and 7B models, achieving +46\% on NQ Search, +367\% on Sokoban, and +133\% on VLM FrozenLake, while standard PER without age decay consistently degrades.
\end{itemize}

\section{Related Work}
\label{sec:related}

Reinforcement learning has become a central technique for post-training LLMs and VLMs, spanning preference alignment~\citep{ouyang2022training}, reasoning~\citep{deepseekr1,shao2024deepseekmath}, agentic tasks~\citep{yao2023webshop,jin2025searchr1,wang2025ragen}, and visual reasoning~\citep{huang2025visionr1,shen2025vlmr1}. Distributed training frameworks such as ROLL~\citep{wang2025roll}, veRL~\citep{sheng2024verl}, and OpenRLHF~\citep{hu2024openrlhf} provide the infrastructure to scale these methods. Below we review three threads most relevant to our work.

\paragraph{Online and Offline RL for LLMs.}
Online (on-policy) methods dominate LLM RL. PPO~\citep{schulman2017ppo} remains the workhorse of RLHF~\citep{ouyang2022training}; GRPO~\citep{shao2024deepseekmath} simplifies it by removing the critic; DAPO~\citep{yu2025dapo}, REINFORCE++~\citep{hu2025reinforce++}, and Dr.~GRPO~\citep{liu2025drgrpo} further refine the policy gradient estimator. On the offline side, DPO~\citep{rafailov2023dpo} and its iterative variants~\citep{dong2024rlhfworkflow} optimize from preference data without environment interaction. A common limitation across online methods is that trajectories are discarded after a single gradient update, while offline methods forgo interaction entirely.

\paragraph{Off-Policy RL for LLMs.}
A middle ground is \emph{off-policy} training, which reuses historical trajectories. Asynchronous RLHF~\citep{noukhovitch2024async} and AReaL~\citep{areal2025} decouple generation from training with uniform replay. This introduces data staleness, prompting work on off-policy stability through importance-weight control~\citep{leroux2025topr,m2po2025,bapo2025,r2vpo2026}. On the data reuse side, RLEP~\citep{zhang2025rlep} replays correct trajectories, DOTS~\citep{dots2025} combines difficulty-targeted selection with replay, and LoRR~\citep{liu2025lorr} enables high replay ratios via parameter resets. Most recently, Fatemi~\citep{fatemi2026prioritized} proposes problem-level prioritized scheduling for RL post-training, but explicitly argues that transition-level PER is ``ill-suited for sequence models'' and instead uses success-rate-based curriculum scheduling. In contrast, we demonstrate that trajectory-level PER \emph{can} succeed in LLM RL when augmented with freshness-aware age decay. Across all existing approaches, none employ prioritized sampling that accounts for both sample informativeness and temporal freshness.

\paragraph{Experience Replay in Classic RL.}
Experience replay~\citep{lin1992self}, popularized by DQN~\citep{mnih2015human}, stores past transitions for reuse. PER~\citep{schaul2016per} assigns priority proportional to TD error so that ``surprising'' transitions are replayed more often, and has been extended to distributed settings~\citep{horgan2018distributed} and high replay ratios~\citep{doro2023sampleefficient,schwarzer2023bbf,fedus2020revisiting}. Most related to our decay mechanism is FPER~\citep{ma2022fper}, which discounts priority based on \emph{replay count}. In contrast, our age decay is measured in \emph{gradient steps} and directly grounded in the exponential ESS decay caused by policy divergence (\cref{sec:staleness}). These techniques are well-established for fixed-dimensional state-action spaces but have not been adapted to the LLM setting, where trajectories are variable-length token sequences and policy drift is far more rapid.

Despite recent attempts at problem-level scheduling~\citep{fatemi2026prioritized}, no prior work has successfully applied trajectory-level PER to LLM training. \method{} bridges this gap by combining informativeness-driven sampling with explicit temporal decay grounded in ESS analysis.

\section{Method: \method{}}
\label{sec:method}

\subsection{Problem Formulation}
\label{sec:formulation}
We model LLM RL as a multi-turn Markov Decision Process (MDP) $(\mathcal{S}, \mathcal{A}, T, R, \gamma)$. At each turn $t$, the state $s_t \in \mathcal{S}$ is the full conversation history, comprising the initial prompt concatenated with all prior assistant responses and environment observations. The action $a_t \in \mathcal{A}$ is the assistant response generated at the current turn. Upon receiving $a_t$, the environment returns an observation $o_t \in \mathcal{O}$. The transition function $T: \mathcal{S} \times \mathcal{A} \times \mathcal{O} \rightarrow \mathcal{S}$ is deterministic, defined by $s_{t+1} = s_t \oplus a_t \oplus o_t$, where $\oplus$ denotes sequence concatenation. The reward function $R: \mathcal{S} \times \mathcal{A} \rightarrow \mathbb{R}$ assigns a scalar reward to each state-action pair. The episode terminates after at most $H$ turns, and we set $\gamma = 1$, corresponding to an undiscounted episodic return.
Given a parameterized policy $\pi_\theta$, the objective is to maximize the expected discounted return:
\begin{equation}
    \max_\theta \; \mathbb{E}_{ \pi_\theta} \!\left[ \sum_t \gamma^tr_t \right] = \max_\theta \; \mathbb{E}_{ \pi_\theta} \!\left[ \sum_t r_t \right].
    \label{eq:objective}
\end{equation}
Policy gradient methods optimize this objective via clipped importance ratios $\rho = \pi_\theta(a \mid s) / \pi_{\mathrm{old}}(a \mid s)$, where $\pi_{\mathrm{old}}$ denotes the
current policy prior to the update. Accordingly, we define the state-value function $V_\pi(s_t) = \mathbb{E}_\pi[\sum_{k=0}^\infty \gamma^k r(s_{t+k}) \mid s_t]$ and the action-value function $Q_\pi(s_t, a_t)= \mathbb{E}_\pi[\sum_{k=0}^\infty \gamma^k r(s_{t+k}) \mid s_t,a_t]$, yielding the advantage function $A_\pi(s, a) \coloneqq Q_\pi(s, a) - V_\pi(s)$. 

\subsection{Prioritized Experience Replay}

Prioritized Experience Replay (PER)~\citep{schaul2016per} maintains a replay buffer $\mathcal{B}$ of transitions $(s, a, r, s')$ and samples them with probability proportional to their priority:
\begin{equation}
    P(i) = \frac{p_i^\alpha}{\sum_k p_k^\alpha}
    \label{eq:per_sampling}
\end{equation}
where $p_i$ is the priority of transition $i$ and $\alpha \in [0, 1]$ controls the degree of prioritization. When $\alpha = 0$, the sampling distribution reduces to uniform sampling. The priority is typically set to the absolute temporal-difference (TD) error: $p_i = |\delta_i| + \epsilon$, where the TD error $\delta_i = r + \gamma V_{\pi}(s') - V_{\pi}(s)$ measures the discrepancy between the observed return $r + \gamma V_{\pi}(s')$ and the current value estimate $V_{\pi}(s)$. Large $|\delta_i|$ indicates ``surprising'' transitions with high learning potential.

To correct for the non-uniform sampling bias, importance sampling weights are applied:
\begin{equation}
    w_i = \left( \frac{1}{N \cdot P(i)} \right)^\beta \bigg/ \max_j w_j
    \label{eq:is_weights}
\end{equation}
where $N = |\mathcal{B}|$ is the current number of transitions in the replay buffer, the $\max_j w_j$ normalization ensures weights are at most 1, and $\beta$ anneals from a small value to 1 over training, gradually increasing the correction.

\subsection{Priority Staleness and Exponential Age Decay}
\label{sec:staleness}
In standard Prioritized Experience Replay (PER)~\citep{schaul2016per}, each trajectory stored in the replay buffer carries a base priority $p_i^{\text{base}}$ that reflects its perceived learning value at the time of collection, such as the absolute TD error $|\delta_i|$, the reward magnitude $|r_i|$, or the absolute advantage $|\hat{A}_i|$.
Once assigned, this priority remains fixed.
As training progresses, however, the current policy $\pi_\theta$ gradually diverges from the behavior policy $\pi_\mu$ that originally generated the trajectory, and the priority recorded at collection time becomes an increasingly poor indicator of how useful the trajectory actually is for training the current policy.
We refer to this growing mismatch as \textbf{priority staleness}.

This problem is especially pronounced in LLM RL, where two factors amplify the effect.
First, because the policy contains billions of parameters, even a single gradient step can produce a substantial shift in the output distribution.
Second, the action space is discrete: each action corresponds to a token, and the probability of selecting a given token can change drastically after only a few updates.
Despite this rapid distributional shift, old trajectories with high priorities continue to dominate sampling long after they have become uninformative or even detrimental to learning.

When a trajectory collected under $\pi_\mu$ is reused to update $\pi_\theta$, the distributional mismatch between the two policies must be corrected through \emph{importance weighting}.
Concretely, each trajectory is weighted by the importance ratio
\begin{equation}
    \rho \;=\; \frac{\pi_\theta(a \mid s)}{\pi_\mu(a \mid s)},
    \label{eq:importance_ratio}
\end{equation}
which re-scales the contribution of each sample so that the resulting gradient estimate is unbiased with respect to $\pi_\theta$.
When the two policies are close, $\rho$ stays near~1 and the correction is benign.
As they diverge, however, $\rho$ becomes highly variable: a few samples receive very large weights while most receive weights close to zero.
In this regime, the gradient estimate is dominated by a small number of samples, and the majority of the replay buffer contributes little to learning.

The \emph{effective sample size} (ESS)~\citep{kong1992note} formalizes this intuition.
Given $n$ importance-weighted samples, the ESS quantifies how many samples drawn directly from $\pi_\theta$ would yield an estimate of equivalent statistical quality:
\begin{equation}
    \mathrm{ESS}
    \;=\; \frac{n}{1 + \mathrm{Var}_{\pi_\mu}[\rho]}.
    \label{eq:ess_def}
\end{equation}
Where $\mathrm{Var}[\rho] = \mathbb{E}[\rho^2] - (\mathbb{E}[\rho])^2$. 
When $\pi_\theta = \pi_\mu$, every importance weight equals~1, $\mathrm{Var}[\rho] = 0$, and all $n$ samples are fully effective ($\mathrm{ESS} = n$).
As the variance of $\rho$ increases, more samples are effectively wasted and the ESS shrinks.
Eq.~\eqref{eq:ess_def} reveals that the ESS is entirely governed by $\mathrm{Var}[\rho]$. To understand how fast the ESS decays as the policies diverge, it suffices to characterize how $\mathrm{Var}[\rho]$ grows over the course of training.
To connect $\mathrm{Var}[\rho]$ to a divergence measure, we introduce the $\chi^2$-divergence, defined as
\begin{equation}
    \chi^2(P \| Q)
    \;=\; \mathbb{E}_{Q}\!\left[\left(\frac{P(x)}{Q(x)} - 1\right)^{\!2}\right],
    \label{eq:chi2_def}
\end{equation}
which measures how far the likelihood ratio $P/Q$ deviates from~1 on average under~$Q$.
Since the importance ratio $\rho = \pi_\theta / \pi_\mu$ is precisely such a likelihood ratio, we can expand Eq.~\eqref{eq:chi2_def} as
\begin{equation}
    \chi^2\!\left(\pi_\theta \| \pi_\mu\right)
    \;=\; \mathbb{E}_{\pi_\mu}[\rho^2]
    - 2\,\mathbb{E}_{\pi_\mu}[\rho] + 1.
    \label{eq:chi2_expand}
\end{equation}
A fundamental property of importance weights is that their expectation under the proposal distribution is always~1:
\begin{equation}
    \mathbb{E}_{\pi_\mu}[\rho]
    \;=\; \int \pi_\mu(x)\,\frac{\pi_\theta(x)}{\pi_\mu(x)}\,dx
    \;=\; \int \pi_\theta(x)\,dx \;=\; 1,
    \label{eq:rho_mean_one}
\end{equation}
since $\pi_\theta$ is a valid probability distribution.
Substituting $\mathbb{E}_{\pi_\mu}[\rho] = 1$ into both Eq.~\eqref{eq:chi2_expand} and the standard variance formula $\mathrm{Var}[\rho] = \mathbb{E}[\rho^2] - (\mathbb{E}[\rho])^2$, both expressions reduce to $\mathbb{E}_{\pi_\mu}[\rho^2] - 1$, yielding the identity
\begin{equation}
    \mathrm{Var}_{\pi_\mu}[\rho]
    \;=\; \chi^2\!\left(\pi_\theta \| \pi_\mu\right).
    \label{eq:var_equals_chi2}
\end{equation}
This result converts the problem of bounding $\mathrm{Var}[\rho]$ into the problem of bounding the $\chi^2$-divergence between the two policies.

We now link the $\chi^2$-divergence back to the KL divergence.
The bridge is the R\'{e}nyi divergence of order~2, denoted $D_2$, which satisfies $D_2(P\|Q) = \log\mathbb{E}_Q[({P}/{Q})^2]$ by definition.
Combined with Eq.~\eqref{eq:var_equals_chi2}, this gives the exact relation $\chi^2 = \exp(D_2) - 1$~\citep{metelli2020importance}.
Moreover, R\'{e}nyi divergence is non-decreasing in its order, so $D_2 \geq D_{\mathrm{KL}}$.
Applying the monotonicity of the exponential function, we obtain
\begin{equation}
    \mathrm{Var}_{\pi_\mu}[\rho]
    \;=\; \exp(D_2) - 1
    \;\geq\; \exp(D_{\mathrm{KL}}) - 1.
    \label{eq:var_chain}
\end{equation}

Plugging this into the ESS formula, we obtain:
\begin{equation}
    \mathrm{ESS}
    \;\leq\; \frac{n}{1 + \exp(D_{{\mathrm{KL}}}) - 1}
    \;=\; n \cdot \exp(-D_{{\mathrm{KL}}}).
    \label{eq:ess_decay}
\end{equation}
Each sample's effective contribution to learning therefore decays \emph{exponentially} with the KL divergence between the two policies.

In this work, we heuristically approximate the degree of KL divergence between the two policies by $c \cdot \Delta$, where $c > 0$ is a constant determined by the learning rate and the typical gradient magnitude, and $\Delta$ denotes the number of gradient steps elapsed since collection. Substituting this approximation into Eq.~\eqref{eq:ess_decay} yields:
\begin{equation}
    \mathrm{ESS} \;\le\; n \cdot \exp(-c \cdot \Delta).
    \label{eq:ess_decay_final}
\end{equation}

Eq.~\eqref{eq:ess_decay_final} motivates down-weighting older trajectories by an exponential factor in their age.
In practice, the per-step divergence rate $c$ depends on training dynamics, such as learning rate schedule, gradient noise, and batch size, which are difficult to estimate reliably online.
We therefore absorb $c$ into a single tunable hyperparameter $\tau = 1/c$, which we call the \emph{age decay constant}, and define the age decay factor as
\begin{equation}
    w_{\mathrm{age}}(\Delta) \;=\; \exp\!\left(-\frac{\Delta}{\tau}\right).
    \label{eq:age_decay}
\end{equation}
When $\tau$ is large, the decay is gentle and older samples retain substantial weight; when $\tau$ is small, the decay is rapid and the buffer effectively forgets old data more quickly.
We compare this exponential form against polynomial decay alternatives in \cref{sec:analysis}.

\subsection{Freshness-Aware Priority}
\label{sec:fresh_priority}

Let $t_i$ be the global training step when trajectory $i$ was collected, and $t$ the current step. We define the \emph{age} $\Delta_i = t - t_i$ and propose the following priority function:

\begin{equation}
    p_i = \underbrace{p_i^{\text{base}}}_{\text{base priority}} \cdot \underbrace{\exp\!\left(-\frac{\Delta_i}{\tau}\right)}_{\text{age decay}},
    \label{eq:fresh_priority}
\end{equation}
where $p_i^{\text{base}}$ is an existing PER base priority and $\tau > 0$ is the age decay constant controlling the half-life of priority ($t_{1/2} = \tau \ln 2$). The base priority $p_i^{\text{base}}$ follows the classical PER intuition that higher-magnitude training signals indicate greater learning potential~\citep{schaul2016per}; it can be instantiated with any standard PER priority signal:

The base priority can be instantiated as reward magnitude $p_i^{\text{base}} = |r_i| + \epsilon$ for critic-free methods such as REINFORCE++~\citep{hu2025reinforce++} and GRPO~\citep{shao2024deepseekmath}, advantage magnitude $p_i^{\text{base}} = |\hat{A}_i| + \epsilon$ for actor-critic methods such as PPO~\citep{schulman2017ppo}, or TD error $p_i^{\text{base}} = |\delta_i| + \epsilon$ following the classical PER formulation~\citep{schaul2016per}. 

In all cases, the absolute value ensures that both high-reward and high-penalty trajectories receive high priority.
The age decay term $\exp(-\Delta_i / \tau)$ is a modular layer applied on top of any base priority, compensating for the exponential ESS decay in Eq.~\eqref{eq:ess_decay}. This multiplicative formulation decouples the question of \emph{what makes a sample informative} (base priority) from the question of \emph{how stale is a sample} (age decay), and ensures that even the highest-priority old trajectory is eventually deprioritized below a fresh trajectory of moderate priority.

\subsection{Sampling and Implementation Details}
\label{sec:sampling}

\paragraph{Proportional Sampling.}
Sampling from the priority distribution (Eq.~\eqref{eq:per_sampling}) na\"ively requires $O(N)$ time. We use a sum segment tree to achieve $O(\log N)$ per sample. To further reduce variance, we employ stratified sampling: the total priority mass $S = \sum_i p_i^\alpha$ is partitioned into $B$ equal segments (where $B$ is the batch size), and one trajectory is drawn uniformly from each segment.

\paragraph{Importance Sampling Correction.}
Prioritized sampling introduces bias that we correct with importance sampling weights:
\begin{equation}
    w_i = \left( \frac{1}{N \cdot P(i)} \right)^\beta \bigg/ \max_j w_j
    \label{eq:is_correction}
\end{equation}
where $N = |\mathcal{B}|$, $P(i) = p_i^\alpha / \sum_k p_k^\alpha$, and $\beta$ anneals from $\beta_0$ to 1.0 over training (following Eq.~\eqref{eq:is_weights}). These weights scale each sample's policy gradient loss $\ell_i$:
\begin{equation}
    \mathcal{L}_{\text{replay}} = \frac{1}{B} \sum_{i=1}^{B} w_i \cdot \ell_i
    \label{eq:weighted_loss}
\end{equation}

\paragraph{Priority Update.}
The base priority and the age decay term are updated independently. Each iteration, a background CPU thread refreshes the age decay for all entries by recomputing $\exp(-\Delta_i/\tau)$ with current ages (Line~6 of Algorithm~\ref{alg:freshper}). The base priority depends on the chosen instantiation: for reward-based priorities it is fixed at collection time, while for advantage- or TD-error-based variants it is recomputed after each training step to reflect the current policy.

\paragraph{Overall Algorithm.}
Algorithm~\ref{alg:freshper} summarizes the complete \method{} training procedure.

\begin{algorithm}[tb]
\caption{\method{} Training}
\label{alg:freshper}
\begin{algorithmic}[1]
\REQUIRE Policy $\pi_\theta$, environment $\mathcal{E}$, replay buffer $\mathcal{B}$, age decay constant $\tau$, replay ratio $K$
\FOR{each iteration}
    \STATE Roll out $\pi_\mu$ in $\mathcal{E}$ to collect episodes $\{e_j\}$ with rewards $\{r_j\}$
    \STATE Compute behavior log-probs $\log \pi_\mu(e_j)$ \hfill $\triangleright$ record before training
    \STATE Store each $e_j$ into $\mathcal{B}$ with base priority $p_j^{\text{base}}$ and $\log \pi_\mu$
    \STATE Train $\pi_\theta$ on fresh batch $\{e_j\}$ via policy gradient \hfill $\triangleright$ on-policy update
    \STATE Refresh priorities: $p_i \leftarrow p_i^{\text{base}} \cdot \exp(-\Delta_i / \tau)$ for all $i \in \mathcal{B}$ \hfill $\triangleright$ async on CPU
    \FOR{$k = 1, \ldots, K$}
        \STATE Sample batch $\mathcal{S}$ from $\mathcal{B}$ with $P(i) \propto p_i^\alpha$ \hfill $\triangleright$ stratified sampling
        \STATE Compute IS weights $w_i$ to correct sampling bias (Eq.~\eqref{eq:is_correction})
        \STATE Train $\pi_\theta$ on $\mathcal{S}$ with IS-weighted policy gradient
    \ENDFOR
    \STATE Sync inference engine: $\pi_\mu \leftarrow \pi_\theta$
\ENDFOR
\end{algorithmic}
\end{algorithm}

\cref{fig:pipeline} illustrates the overall training pipeline. The replay buffer and priority logic run on the CPU controller, naturally pipelining with GPU-bound inference and training. Further architectural details (eviction policy, asynchronous priority refresh, framework integration) are provided in the supplementary material.

\section{Experiments}
\label{sec:experiments}

We evaluate \method{} across eight environments spanning text-only LLM and multimodal VLM settings. All experiments use REINFORCE++~\citep{hu2025reinforce++} as the policy gradient algorithm and are implemented on the ROLL framework~\citep{wang2025roll}.

\subsection{Experimental Setup}

\paragraph{Environments.}
We select eight environments of varying difficulty and modality. On the \textbf{LLM} side, \textbf{NQ Search}~\citep{jin2025searchr1} is an agentic retrieval-augmented QA task on Natural Questions in which the model interleaves multi-step reasoning with search engine queries and produces a final answer scored by exact match; we use FAISS-based retrieval with E5 embeddings. \textbf{AIME} is a math competition task where the model solves American Invitational Mathematics Examination problems with integer answers (000--999), allowing up to 3 attempts per problem. \textbf{Sokoban} (ROLL built-in) is a box-pushing puzzle on a $6{\times}6$ grid requiring multi-step planning with irreversible actions; we evaluate a \emph{Simple} variant (larger rooms, fewer boxes) and a \emph{Hard} variant (tighter layouts, more boxes), with scores ranging from $-1$ to $+3$. \textbf{CliffWalking} (ROLL built-in) is a $4{\times}12$ grid navigation task (optimal score~$0$) that serves as a simple-environment control. \textbf{GSM8K}~\citep{cobbe2021gsm8k} consists of grade-school math word problems on which the base model already exceeds 93\% accuracy, serving as a near-saturated control. On the \textbf{VLM} side, \textbf{FrozenLake} (ROLL built-in) is a visual navigation task where a VLM agent navigates a $4{\times}4$ grid rendered as RGB images, avoiding holes on slippery ice; \textbf{GeoQA} is a geometry question-answering task requiring the VLM to interpret diagrams and solve problems.

\paragraph{Models.}
We use Qwen2.5-7B-Instruct~\citep{qwen2.5} for NQ Search and AIME, Qwen2.5-0.5B-Instruct for all other LLM tasks (Sokoban, CliffWalking, GSM8K), and Qwen2.5-VL-3B-Instruct~\citep{qwen2.5vl} for VLM tasks (FrozenLake, GeoQA). Full per-experiment hardware and configuration details are provided in the supplementary material.

\paragraph{RL Algorithm and Base Priority.}
Since REINFORCE++ is critic-free, we instantiate the base priority in Eq.~\eqref{eq:fresh_priority} as $p_i^{\text{base}} = |r_i| + \epsilon$ (reward magnitude). When combined with actor-critic methods, the base priority can be replaced with advantage- or TD-error-based variants without modifying the age decay mechanism.

\paragraph{Baselines.}
We compare three configurations: (1)~\textbf{On-Policy}---standard REINFORCE++ without a replay buffer, as implemented in ROLL~\citep{wang2025roll}; (2)~\textbf{Standard PER}---reward-based prioritized replay ($p_i = |r_i| + \epsilon$) without age decay, corresponding to conventional PER~\citep{schaul2016per} adapted for LLM RL; and (3)~\textbf{\method{} (Ours)}---freshness-aware PER with age decay ($p_i = (|r_i| + \epsilon) \cdot \exp(-\Delta_i / \tau)$, $\tau{=}500$ by default).

\paragraph{Implementation Details.}
We implement \method{} on the ROLL framework~\citep{wang2025roll}, using DeepSpeed for distributed training and vLLM for inference, with GPU counts ranging from 2 to 8 depending on model size (see supplementary material). Replay buffer capacity: 50K trajectories. Priority exponent $\alpha{=}0.6$, IS exponent $\beta{=}0.4$, age decay $\tau{=}500$ by default, replay ratio $K{=}2$, learning rate $1{\times}10^{-6}$, clip $\epsilon{=}0.2$. For off-policy training with smaller models (0.5B) and for AIME (7B), we use tighter advantage clipping and disable KL regularization (``Config~A''; see supplementary material). AIME uses $\tau{=}1000$.

\subsection{Main Results}
\label{sec:main_results}
\begin{figure}[htb]
\centering
\begin{minipage}[b]{0.32\linewidth}
    \centering
    \includegraphics[width=\linewidth]{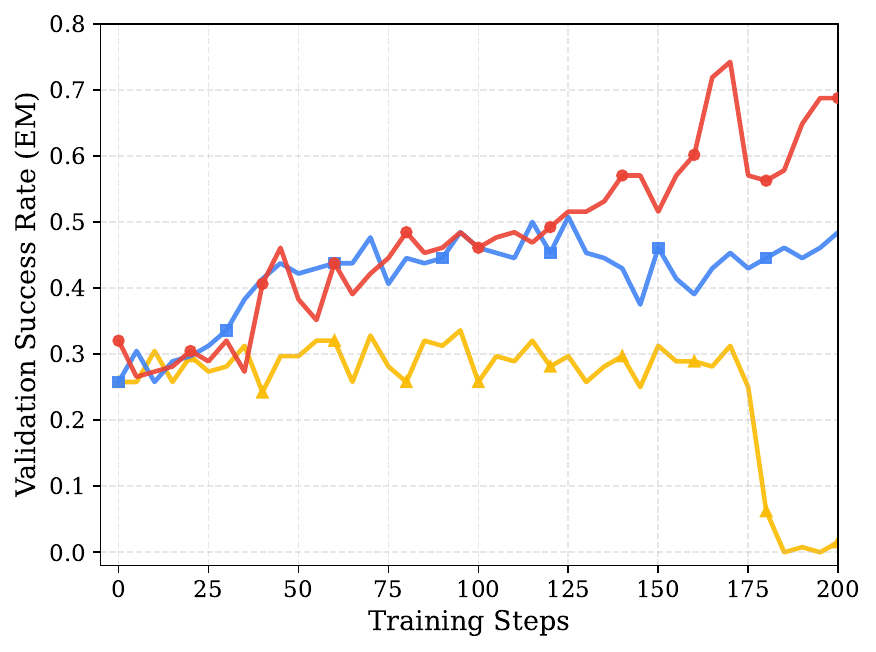}
    \\ {\small (a) NQ Search}
\end{minipage}
\hfill
\begin{minipage}[b]{0.32\linewidth}
    \centering
    \includegraphics[width=\linewidth]{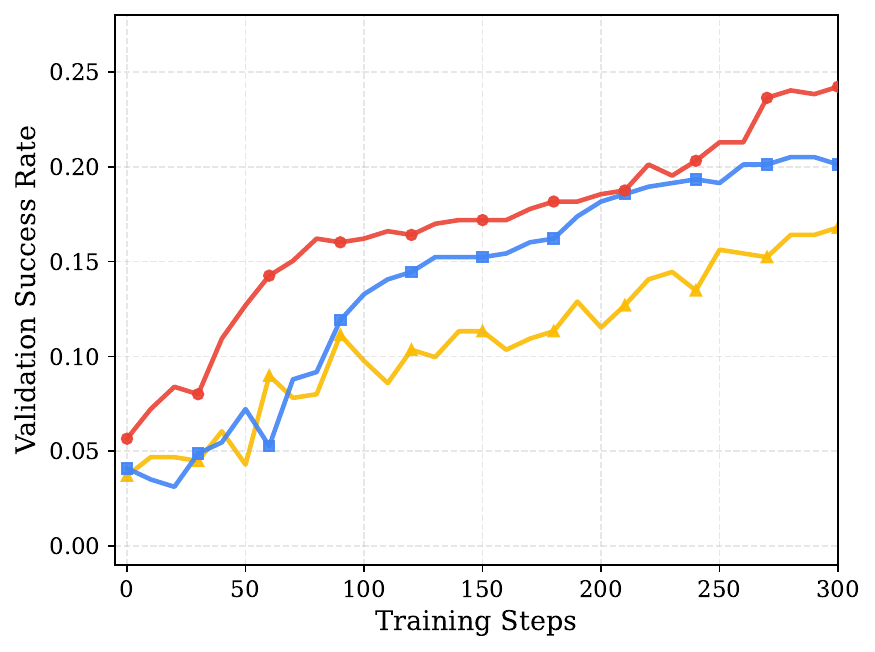}
    \\ {\small (b) AIME}
\end{minipage}
\hfill
\begin{minipage}[b]{0.32\linewidth}
    \centering
    \includegraphics[width=\linewidth]{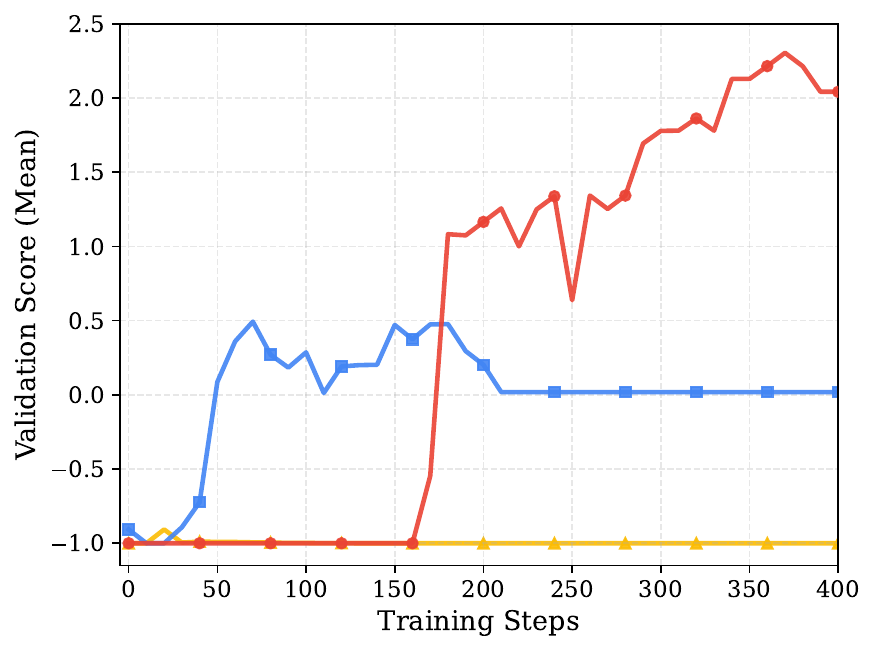}
    \\ {\small (c) Sokoban Simple}
\end{minipage}
\vspace{0.5em}
\hfill
\begin{minipage}[b]{0.32\linewidth}
    \centering
    \includegraphics[width=\linewidth]{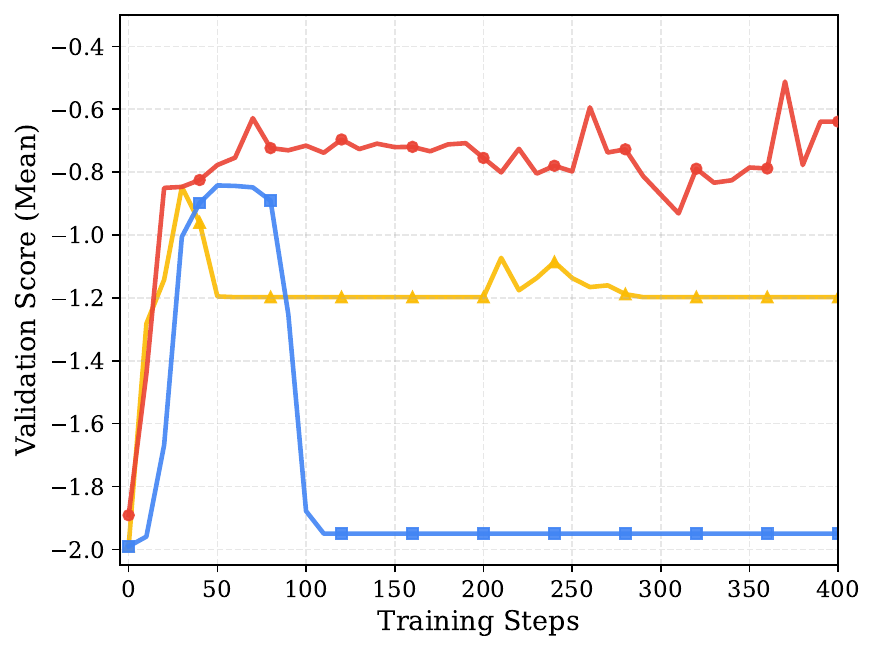}
    \\ {\small (d) Sokoban Hard}
\end{minipage}
\hfill
\begin{minipage}[b]{0.32\linewidth}
    \centering
    \includegraphics[width=\linewidth]{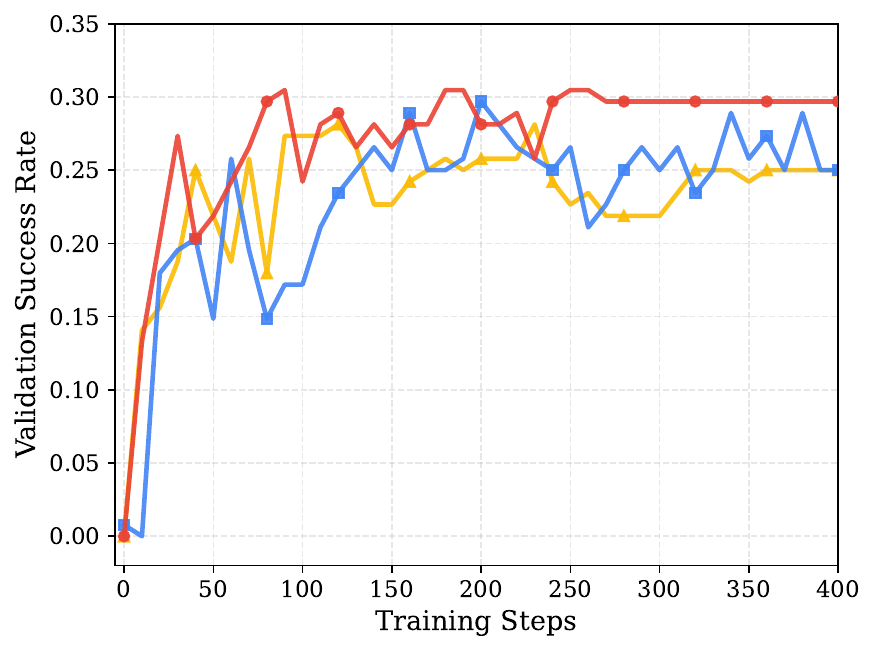}
    \\ {\small (e) FrozenLake}
\end{minipage}
\hfill\mbox{}
\caption{Learning curves on LLM tasks. {\color[HTML]{4285F4}Blue~$\blacksquare$}: On-Policy; {\color[HTML]{FBBC04}Yellow~$\blacktriangle$}: Standard PER; {\color[HTML]{EA4335}Red~$\bullet$}: \method{} (Ours).}
\label{fig:llm_main}
\end{figure}

\begin{figure}[tb]
\centering
\begin{minipage}[b]{0.48\linewidth}
    \centering
    \includegraphics[width=\linewidth]{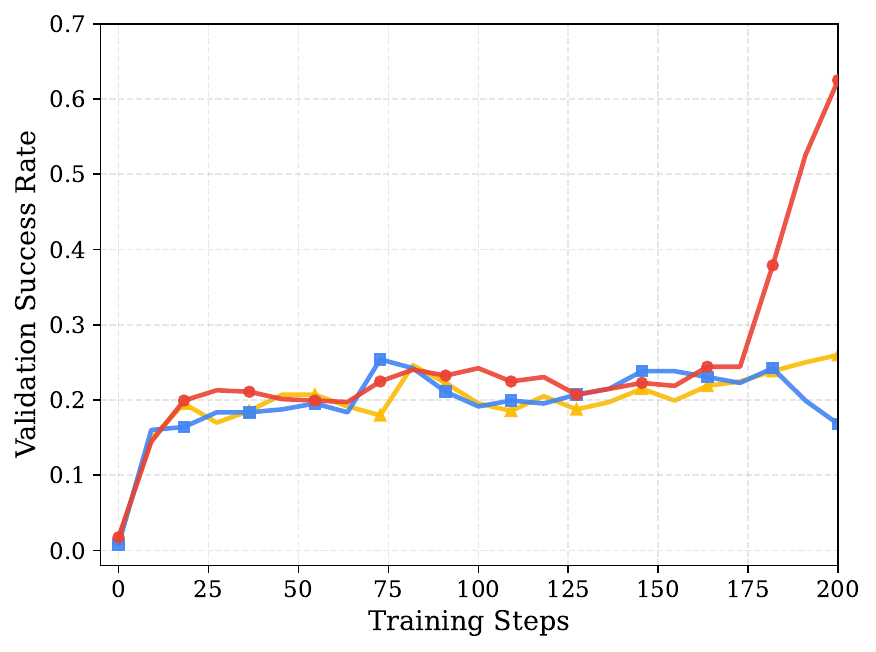}
    \\ {\small (a) VLM FrozenLake}
\end{minipage}
\hfill
\begin{minipage}[b]{0.48\linewidth}
    \centering
    \includegraphics[width=\linewidth]{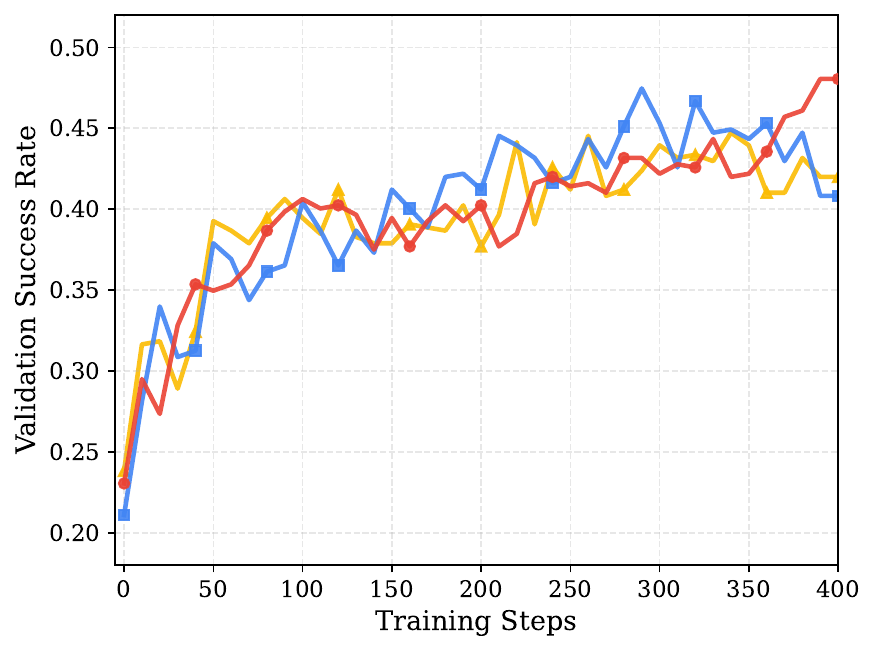}
    \\ {\small (b) VLM GeoQA}
\end{minipage}
\caption{Learning curves on VLM tasks. {\color[HTML]{4285F4}Blue~$\blacksquare$}: On-Policy; {\color[HTML]{FBBC04}Yellow~$\blacktriangle$}: Standard PER; {\color[HTML]{EA4335}Red~$\bullet$}: \method{} (Ours).}
\label{fig:vlm_main}
\end{figure}
\cref{tab:main_results} summarizes peak performance across all seven main experiments, and \cref{fig:llm_main,fig:vlm_main} show the corresponding learning curves. \method{} achieves the best peak performance on all seven tasks, with the largest gains on challenging agentic tasks (NQ Search +46\%, Sokoban Simple +367\%, VLM FrozenLake +133\%) and consistent improvements on math competition (AIME +18\%). Standard PER without age decay underperforms the on-policy baseline on most tasks, sometimes catastrophically (Sokoban Simple, NQ Search). We analyze these results in detail in \cref{sec:analysis}.

\begin{table}[htb]
    \caption{Peak performance across main tasks. Best in \textbf{bold}. \method{} achieves the best result on all tasks.}
    \label{tab:main_results}
    \centering
    \begin{tabular}{lllccc}
        \toprule
        Task & Type & Metric & On-Policy & Standard PER & \method{} (Ours) \\
        \midrule
        NQ Search       & LLM & EM (Success)  & 0.508 & 0.336 & \textbf{0.742} \\
        AIME            & LLM & Success       & 0.205 & 0.168 & \textbf{0.242} \\
        Sokoban Simple  & LLM & Score         & 0.493 & $-$0.907 & \textbf{2.304} \\
        Sokoban Hard    & LLM & Score         & $-$0.842 & $-$0.847 & \textbf{$-$0.512} \\
        FrozenLake      & LLM & Success       & 0.297 & 0.281 & \textbf{0.305} \\
        \midrule
        FrozenLake      & VLM & Success       & 0.270 & 0.250 & \textbf{0.630} \\
        GeoQA           & VLM & Success       & 0.475 & 0.447 & \textbf{0.481} \\
        \bottomrule
    \end{tabular}
\end{table}

\subsection{Analysis}
\label{sec:analysis}
\begin{figure}[htb]
\centering
\begin{minipage}[b]{0.48\linewidth}
    \centering
    \includegraphics[width=\linewidth]{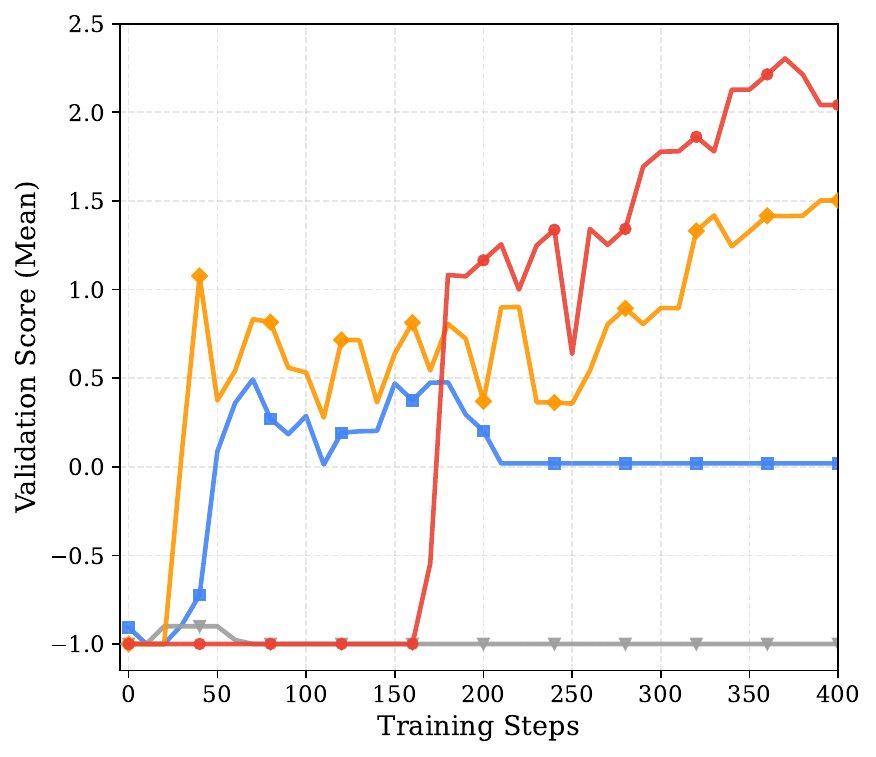}
    \\ {\small (a) Sokoban Simple}
\end{minipage}
\hfill
\begin{minipage}[b]{0.48\linewidth}
    \centering
    \includegraphics[width=\linewidth]{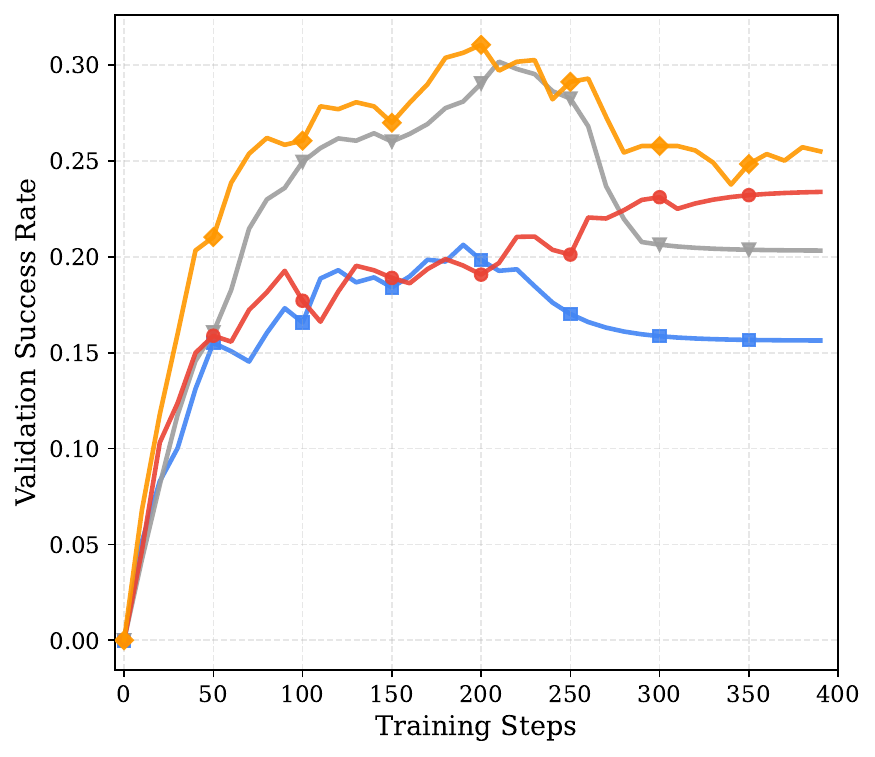}
    \\ {\small (b) FrozenLake (LLM)}
\end{minipage}
\caption{Ablation of age decay constant $\tau$. {\color[HTML]{4285F4}Blue~$\blacksquare$}: Baseline; {\color[HTML]{EA4335}Red~$\bullet$}: $\tau{=}500$; {\color[HTML]{FF9800}Orange~$\blacklozenge$}: $\tau{=}1000$; {\color[HTML]{9E9E9E}Gray~$\blacktriangledown$}: $\tau{=}1500$. (a)~Sokoban: $\tau{=}500$ is optimal; $\tau{=}1500$ fails completely. (b)~FrozenLake: $\tau{=}1000$ is optimal; all $\tau$ values outperform Baseline.}
\label{fig:tau_ablation}
\end{figure}
\paragraph{The benefit of replay scales with task difficulty.}

\begin{figure}[!htb]
\centering
\begin{minipage}[b]{0.48\linewidth}
    \centering
    \includegraphics[width=\linewidth]{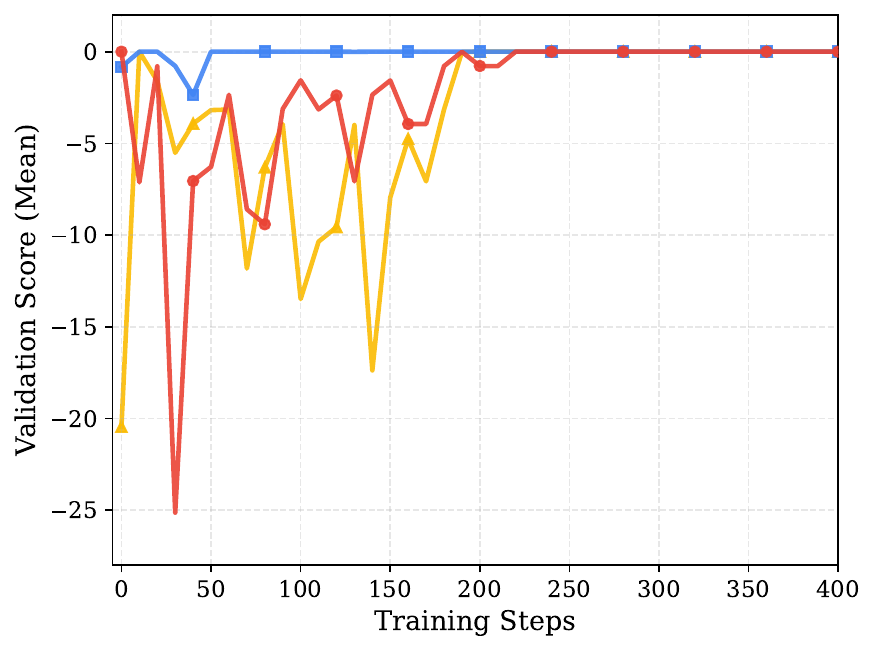}
    \\ {\small (a) CliffWalking}
\end{minipage}
\hfill
\begin{minipage}[b]{0.48\linewidth}
    \centering
    \includegraphics[width=\linewidth]{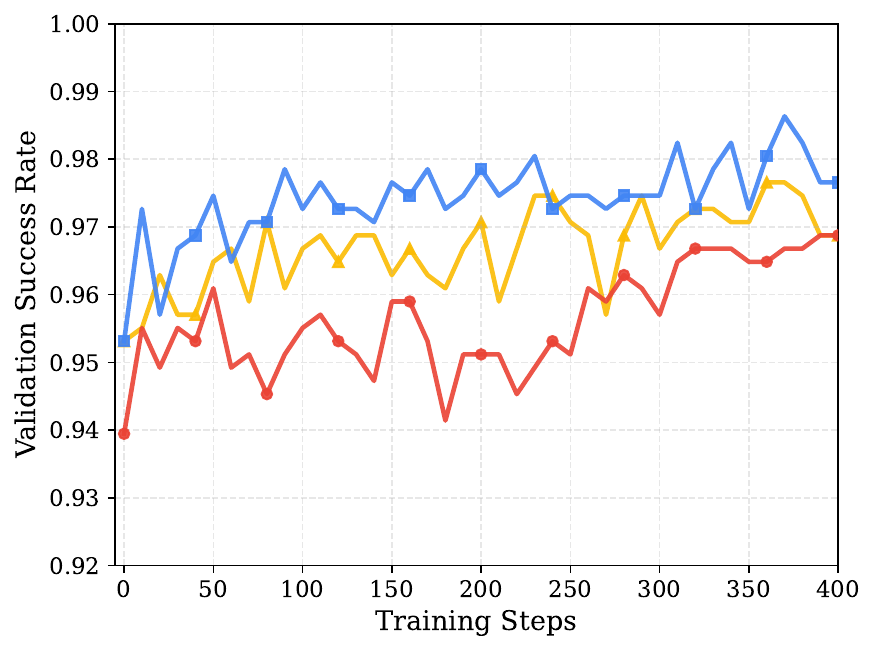}
    \\ {\small (b) GSM8K}
\end{minipage}
\caption{Control experiments. {\color[HTML]{4285F4}Blue~$\blacksquare$}: On-Policy; {\color[HTML]{FBBC04}Yellow~$\blacktriangle$}: Standard PER; {\color[HTML]{EA4335}Red~$\bullet$}: \method{} (Ours). (a)~CliffWalking: all methods converge to optimal; replay adds transient instability. (b)~GSM8K: initial performance already ${>}$93\%; all methods saturate at ${\sim}$97\%.}
\label{fig:control}
\end{figure}

Ablation on the age decay constant $\tau$ across Sokoban Simple and FrozenLake (\cref{fig:tau_ablation}) reveals that $\tau$ is task-dependent: Sokoban, which exhibits rapid policy drift, requires aggressive decay ($\tau{=}500$), while the slower-evolving FrozenLake benefits from a gentler setting ($\tau{=}1000$). This pattern extends to the main results. The largest gains appear on tasks where on-policy training struggles or collapses: NQ Search (+46\%), Sokoban Simple (+367\%), and VLM FrozenLake (+133\%). Conversely, control experiments on CliffWalking and GSM8K (\cref{fig:control}) show that when the task is simple enough for on-policy training to solve quickly, or the model is already near-saturated, replay provides little additional benefit. Taken together, these results suggest a practical guideline: \emph{the harder the task and the faster the policy evolves, the more valuable freshness-aware replay becomes, and the more aggressively $\tau$ should be set}.

\paragraph{Priority staleness validates the need for age decay.}
Standard PER without age decay underperforms the on-policy baseline on most tasks, and fails catastrophically on Sokoban Simple and NQ Search. This is consistent with the ESS decay analysis in \cref{sec:staleness}: without temporal decay, old trajectories with high base priorities dominate sampling even after they have become stale relative to the current policy. The contrast between Standard PER and \method{} isolates the contribution of age decay, confirming that it is essential for applying PER to LLM RL.

\paragraph{Training stability.}

\begin{figure}[htb]
    \centering
    \includegraphics[width=0.55\linewidth]{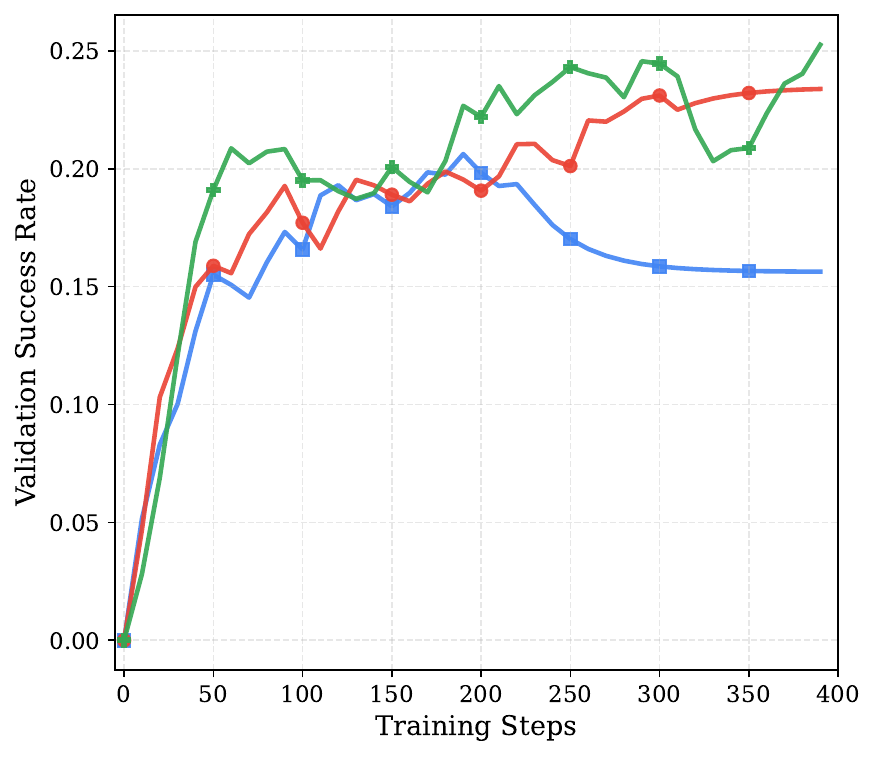}
    \caption{IS correction ablation on FrozenLake (LLM). {\color[HTML]{4285F4}Blue~$\blacksquare$}: Baseline; {\color[HTML]{EA4335}Red~$\bullet$}: $\tau{=}500$; {\color[HTML]{34A853}Green~$\boldsymbol{+}$}: $\tau{=}500$ + IS. Adding IS ($\beta{=}0.4$) does not substantially improve peak performance but eliminates late-stage degradation: the IS variant maintains its peak (0.281) through the end of training.}
    \label{fig:is_ablation}
\end{figure}
Across multiple tasks, on-policy training peaks early and then degrades (e.g., Sokoban Simple, GeoQA). \method{} consistently sustains its peak performance through the end of training. IS correction further enhances this stability without improving peak performance (\cref{fig:is_ablation}), validating the modular design where age decay and IS correction address complementary aspects of off-policy learning.

\paragraph{Generalization to VLM.}
\method{} transfers to multimodal settings without modification, achieving +133\% on VLM FrozenLake. This confirms that the age decay mechanism is agnostic to the observation modality.

\section{Conclusion}
\label{sec:conclusion}

We presented \method{}, which, to the best of our knowledge, is the first to successfully apply Prioritized Experience Replay to LLM and VLM reinforcement learning. The key enabler is freshness-aware age decay, which addresses the priority staleness problem caused by the rapid policy evolution of billion-parameter models. Evaluations across eight environments with models ranging from 0.5B to 7B confirm that \method{} consistently outperforms both on-policy baselines and standard PER, with particularly large gains on challenging agentic tasks. More broadly, our results suggest that classical sample-efficiency techniques from RL remain highly effective in the LLM era, provided they are adapted to handle the unique training dynamics of large language models.

\paragraph{Limitations and Future Work.}
Our experiments span three model scales (0.5B, 3B, and 7B parameters), but the interaction of freshness decay with even larger models (e.g., 70B+) and different training dynamics remains to be studied. Automatic tuning of $\tau$ based on observed policy divergence rates is a promising direction for future work.

\section*{Acknowledgement}
This research was supported by funding from the King Abdullah University of Science and Technology (KAUST) Center of Excellence for Generative AI under Award No. 5940.
\bibliographystyle{plainnat}
\bibliography{main}

\newpage
\appendix

\section{System Design Details}
\label{app:system}

\subsection{Replay Buffer Architecture}
\label{app:buffer_arch}

The replay buffer operates at the \emph{trajectory level}: each entry stores a complete episode (prompt, all assistant turns, and environment observations) along with its behavior log-probabilities, reward, collection step~$t_i$, and current priority~$p_i$. This granularity naturally matches agentic RL, where episode-level rewards are the primary training signal.

When the buffer reaches capacity, we evict the entry with the lowest effective priority---i.e., the oldest, lowest-priority trajectory---which follows naturally from the age decay: entries whose priority has decayed near zero are evicted first. The priority refresh step (Line~5 of Algorithm~\ref{alg:freshper} in the main paper) recomputes all priorities with current ages once per iteration. To avoid blocking the training loop, this $O(N)$ scan runs in a background CPU thread during on-policy GPU training, completing in $50$--$100$\,ms for $N{=}100$K entries and fully overlapping with GPU computation.

\subsection{Framework Integration}
\label{app:pipeline}

We implement \method{} on top of the ROLL framework~\citep{wang2025roll}, which provides a single-controller architecture that decouples inference and training across GPU pools. The replay buffer and all priority logic run on the CPU-side controller, naturally pipelining with GPU-bound training and inference. The training flow follows Algorithm~\ref{alg:freshper}: fresh rollouts are first used for on-policy training, then stored into the buffer; the replay loop draws prioritized batches for $K$ additional off-policy updates per environment interaction.

\section{Additional Ablation Studies}
\label{app:ablations}

\subsection{Age Decay Constant $\tau$}
\label{app:tau_ablation}

We ablate $\tau \in \{500, 1000, 1500\}$ on Sokoban Simple and FrozenLake (LLM, 0.5B), comparing against On-Policy and Standard PER (equivalent to $\tau{=}\infty$).

\begin{figure}[htb]
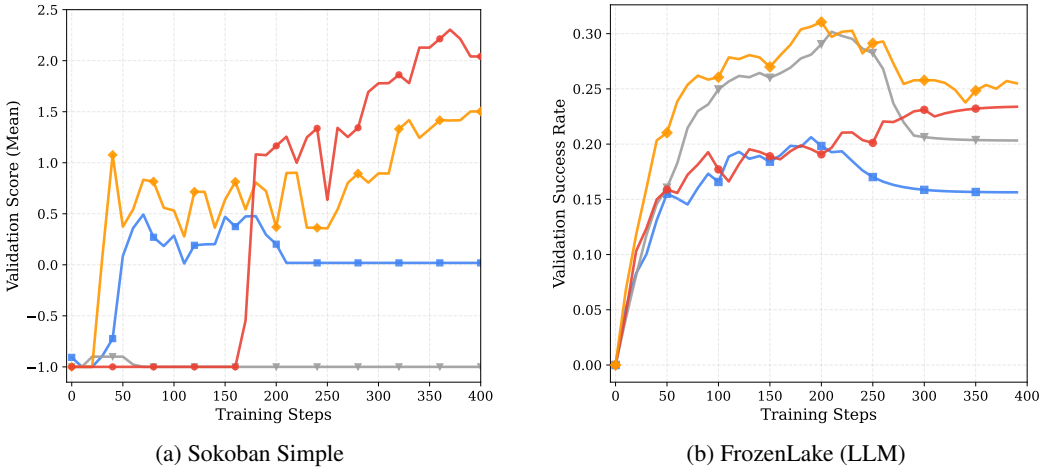

\centering
\begin{minipage}[b]{0.48\linewidth}
    \centering
    \includegraphics[width=\linewidth]{figures/llm/sokoban_age_decay_ablation.pdf}
    \\ {\small (a) Sokoban Simple}
\end{minipage}
\hfill
\begin{minipage}[b]{0.48\linewidth}
    \centering
    \includegraphics[width=\linewidth]{figures/llm/frozen_lake_age_decay_ablation.pdf}
    \\ {\small (b) FrozenLake (LLM)}
\end{minipage}
\caption{Ablation of age decay constant $\tau$. {\color[HTML]{4285F4}Blue~$\blacksquare$}: Baseline; {\color[HTML]{EA4335}Red~$\bullet$}: $\tau{=}500$; {\color[HTML]{FF9800}Orange~$\blacklozenge$}: $\tau{=}1000$; {\color[HTML]{9E9E9E}Gray~$\blacktriangledown$}: $\tau{=}1500$.}
\label{fig:supp_tau_ablation}
\end{figure}

On Sokoban Simple (\cref{fig:supp_tau_ablation}a), $\tau{=}500$ achieves a peak score of 2.30, $\tau{=}1000$ reaches 1.50, and $\tau{=}1500$ completely fails ($-$0.90), identical to Standard PER. This environment exhibits rapid policy drift, requiring aggressive decay. On FrozenLake (\cref{fig:supp_tau_ablation}b), the ranking shifts: $\tau{=}1000$ achieves the highest peak (0.336), followed by $\tau{=}1500$ (0.328) and $\tau{=}500$ (0.266). Unlike Sokoban, even $\tau{=}1500$ remains effective, indicating that FrozenLake tolerates higher data staleness.

\subsection{Importance Sampling Correction}
\label{app:is_ablation}

We investigate whether importance sampling (IS) correction improves training stability when combined with freshness decay. We compare $\tau{=}500$ with and without IS ($\beta{=}0.4$) on FrozenLake.

\begin{figure}[htb]
    \centering
    \includegraphics[width=0.55\linewidth]{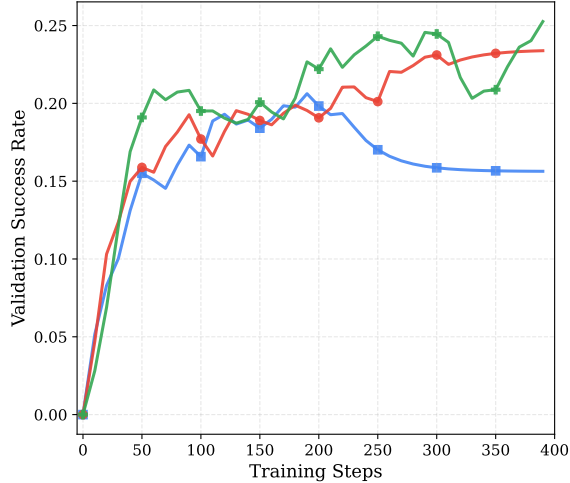}
    \caption{IS correction ablation on FrozenLake (LLM). {\color[HTML]{4285F4}Blue~$\blacksquare$}: Baseline; {\color[HTML]{EA4335}Red~$\bullet$}: $\tau{=}500$; {\color[HTML]{34A853}Green~$\boldsymbol{+}$}: $\tau{=}500$ + IS.}
    \label{fig:supp_is_ablation}
\end{figure}

\cref{fig:supp_is_ablation} shows that IS correction does not substantially boost peak performance ($+$5.9\%, from 0.266 to 0.281), but its primary value lies in \emph{training stability}: the IS variant is the only configuration whose final performance equals its peak (0.281 at both step~40 and step~390), whereas all other methods degrade in late training. This complementarity---freshness decay improves peak performance by suppressing stale priorities, while IS correction stabilizes training by compensating for distribution shift---validates the modular design of \method{}, where age decay and IS correction can be independently activated based on task requirements.

\subsection{When Does Replay Help?}
\label{app:control}

To understand the boundary conditions of \method{}, we evaluate on two environments where replay is expected to offer little benefit.

\begin{figure}[htb]
\centering
\begin{minipage}[b]{0.48\linewidth}
    \centering
    \includegraphics[width=\linewidth]{figures/llm/cliffwalking.pdf}
    \\ {\small (a) CliffWalking}
\end{minipage}
\hfill
\begin{minipage}[b]{0.48\linewidth}
    \centering
    \includegraphics[width=\linewidth]{figures/llm/gsm8k.pdf}
    \\ {\small (b) GSM8K}
\end{minipage}
\caption{Control experiments. {\color[HTML]{4285F4}Blue~$\blacksquare$}: On-Policy; {\color[HTML]{FBBC04}Yellow~$\blacktriangle$}: Standard PER; {\color[HTML]{EA4335}Red~$\bullet$}: \method{} (Ours).}
\label{fig:supp_control}
\end{figure}

\paragraph{Too-Simple Environments (CliffWalking).}
All three methods converge to the optimal score of 0 (\cref{fig:supp_control}a). On-Policy converges fastest and most stably, while Standard PER and \method{} exhibit transient instability (PER shows spikes down to $-$17 at step~140). When the task is simple enough for on-policy training to solve quickly, replay provides no benefit and may introduce unnecessary off-policy noise.

\paragraph{Near-Saturated Models (GSM8K).}
All methods cluster in the narrow 0.94--0.99 range (\cref{fig:supp_control}b). The base model already achieves 93--95\% success; training provides only ${\sim}$2--3\% additional improvement. With so little room for improvement, replay-based methods cannot demonstrate meaningful advantages.

These control experiments delineate the regime where \method{} is most valuable: tasks that are \emph{challenging enough} that on-policy training either converges slowly or collapses, and where the model has \emph{sufficient room for improvement}.

\section{Derivations for Priority Staleness}
\label{app:staleness_derivation}

This section provides detailed derivations for the results stated in \cref{sec:staleness}.

\subsection{Variance of Importance Weights Equals \texorpdfstring{$\chi^2$}{Chi-squared}-Divergence}
\label{app:var_chi2}

\begin{proposition}
\label{prop:var_chi2}
Let $P$ and $Q$ be two probability distributions with $Q$ absolutely continuous with respect to $P$, and let $\rho(x) = P(x)/Q(x)$ denote the importance ratio. Then
\begin{equation}
    \mathrm{Var}_{Q}[\rho] = \chi^2(P \| Q),
\end{equation}
where $\chi^2(P\|Q) = \mathbb{E}_Q\!\left[(\rho - 1)^2\right]$ is the $\chi^2$-divergence.
\end{proposition}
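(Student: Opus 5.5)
The plan is a direct algebraic comparison: expand both sides as functionals of the first two moments of $\rho$ under $Q$, and show they agree once we know $\mathbb{E}_Q[\rho]=1$. This is the computation already sketched in Eqs.~\eqref{eq:chi2_expand}--\eqref{eq:var_equals_chi2}, so the proof will make it rigorous in measure-theoretic form.

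\textbf{Step 1: the first moment.} Write $\rho = dP/dQ$ as a Radon--Nikodym derivative. Then
\begin{equation*}
\mathbb{E}_Q[\rho] = \int \frac{dP}{dQ}\, dQ = P(\mathcal{X}) = 1 .
\end{equation*}
This needs $P \ll Q$: every set of $Q$-measure zero must have $P$-measure zero, so that no $P$-mass sits outside the support of $Q$. The statement phrases the hypothesis as ``$Q$ absolutely continuous with respect to $P$''. I would read it as the direction actually needed for $\rho = P/Q$ to be a density, namely $P \ll Q$, and say so explicitly.

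Without that assumption the integral only gives $\mathbb{E}_Q[\rho] = P(\operatorname{supp} Q) \le 1$, and the identity fails in general. A counterexample is easy to build: put part of $P$'s mass on a $Q$-null set. Pinning down this hypothesis is the only real obstacle; the rest is algebra.

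\textbf{Step 2: expand both sides.} Since $(\rho-1)^2 = \rho^2 - 2\rho + 1$ and $\mathbb{E}_Q[\rho]$ is finite (it equals $1$), linearity of expectation gives
\begin{equation*}
\chi^2(P\|Q) = \mathbb{E}_Q[\rho^2] - 2\,\mathbb{E}_Q[\rho] + 1 .
\end{equation*}
Separately, the standard variance identity gives
\begin{equation*}
\mathrm{Var}_Q[\rho] = \mathbb{E}_Q[\rho^2] - \bigl(\mathbb{E}_Q[\rho]\bigr)^2 .
\end{equation*}
Substituting $\mathbb{E}_Q[\rho]=1$ from Step 1 reduces both expressions to $\mathbb{E}_Q[\rho^2]-1$, which proves the claim.

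\textbf{Step 3: the infinite case.} I would also address $\mathbb{E}_Q[\rho^2]=\infty$, where the subtraction above is not well defined. There is a cleaner route that avoids it entirely. Variance is by definition $\mathbb{E}_Q[(\rho-\mathbb{E}_Q\rho)^2]$, and $\mathbb{E}_Q\rho = 1$ by Step 1, so
\begin{equation*}
\mathrm{Var}_Q[\rho] = \mathbb{E}_Q[(\rho-1)^2] = \chi^2(P\|Q)
\end{equation*}
directly. Both sides are then simultaneously finite or simultaneously $+\infty$. I would present this one-line argument as the main proof and keep the moment expansion of Step 2 as a remark, since the expansion is what connects to the form $\chi^2 = \exp(D_2)-1$ used afterwards in Eq.~\eqref{eq:var_chain}.
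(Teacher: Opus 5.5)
Your proof is correct and is essentially the paper's: both rest on $\mathbb{E}_Q[\rho]=1$ and reduce $\mathrm{Var}_Q[\rho]$ and $\chi^2(P\|Q)$ to $\mathbb{E}_Q[\rho^2]-1$. Your Step~2 is exactly the paper's argument, and the centered-moment shortcut in Step~3 is equivalent; the infinite case was never a real obstacle, since only $\mathbb{E}_Q[\rho]$ needs to be finite for $\mathbb{E}_Q[\rho^2]-1$ to make sense in $[0,\infty]$.

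You are also right that the hypothesis must be $P \ll Q$ rather than the stated $Q \ll P$. The paper's step $\int Q(x)\,\frac{P(x)}{Q(x)}\,dx=\int P(x)\,dx=1$ silently uses that direction, and without it one only gets $\mathbb{E}_Q[\rho]=P(\{Q>0\})\le 1$, in which case $\chi^2(P\|Q)-\mathrm{Var}_Q[\rho]=(1-P(\{Q>0\}))^2$ can be positive.
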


\begin{proof}
By the definition of variance:
\begin{equation}
    \mathrm{Var}_{Q}[\rho]
    = \mathbb{E}_{Q}[\rho^2] - \left(\mathbb{E}_{Q}[\rho]\right)^2.
    \label{eq:app_var_def}
\end{equation}
We first evaluate the mean of $\rho$ under $Q$:
\begin{equation}
    \mathbb{E}_{Q}[\rho]
    = \int Q(x) \cdot \frac{P(x)}{Q(x)} \, dx
    = \int P(x) \, dx
    = 1,
    \label{eq:app_rho_mean}
\end{equation}
where the last equality holds because $P$ is a valid probability distribution. Substituting into Eq.~\eqref{eq:app_var_def}:
\begin{equation}
    \mathrm{Var}_{Q}[\rho]
    = \mathbb{E}_{Q}[\rho^2] - 1.
    \label{eq:app_var_simplified}
\end{equation}

Now expand the $\chi^2$-divergence from its definition:
\begin{equation}
    \chi^2(P \| Q)
    = \mathbb{E}_{Q}\!\left[(\rho - 1)^2\right]
    = \mathbb{E}_{Q}[\rho^2] - 2\,\mathbb{E}_{Q}[\rho] + 1
    = \mathbb{E}_{Q}[\rho^2] - 1,
    \label{eq:app_chi2_expanded}
\end{equation}
where we again used $\mathbb{E}_Q[\rho] = 1$. Comparing Eq.~\eqref{eq:app_var_simplified} and Eq.~\eqref{eq:app_chi2_expanded} completes the proof.
\end{proof}

\subsection{Monotonicity of R\'{e}nyi Divergence and the \texorpdfstring{$D_2 \geq D_{\mathrm{KL}}$}{D2 >= DKL} Bound}
\label{app:renyi_mono}

The R\'{e}nyi divergence of order $\alpha > 0$ ($\alpha \neq 1$) between distributions $P$ and $Q$ is defined as:
\begin{equation}
    D_\alpha(P \| Q)
    = \frac{1}{\alpha - 1}\,\log \mathbb{E}_{Q}\!\left[\left(\frac{P(x)}{Q(x)}\right)^{\!\alpha}\right].
    \label{eq:app_renyi_def}
\end{equation}
Two standard properties are relevant here:

\paragraph{Property 1: Connection to $\chi^2$-divergence.}
Setting $\alpha = 2$ in Eq.~\eqref{eq:app_renyi_def}:
\begin{equation}
    D_2(P \| Q)
    = \log \mathbb{E}_{Q}\!\left[\rho^2\right]
    = \log\!\left(1 + \chi^2(P \| Q)\right),
    \label{eq:app_d2_chi2}
\end{equation}
where we used $\mathbb{E}_Q[\rho^2] = 1 + \chi^2$ from Eq.~\eqref{eq:app_chi2_expanded}. Rearranging:
\begin{equation}
    \chi^2(P \| Q) = \exp\!\left(D_2(P \| Q)\right) - 1.
    \label{eq:app_chi2_from_d2}
\end{equation}

\paragraph{Property 2: Monotonicity in $\alpha$.}
Previous studies \citep{van_Erven_2014} have demonstrated that $D_\alpha(P\|Q)$ is non-decreasing in $\alpha$. The KL divergence is recovered in the limit $\alpha \to 1$:
\begin{equation}
    D_{\mathrm{KL}}(P \| Q) = \lim_{\alpha \to 1}\, D_\alpha(P \| Q).
\end{equation}
Since $\alpha = 2 > 1$, monotonicity immediately gives:
\begin{equation}
    D_2(P \| Q) \;\geq\; D_{\mathrm{KL}}(P \| Q).
    \label{eq:app_d2_geq_kl}
\end{equation}

Combining Eq.~\eqref{eq:app_chi2_from_d2} and Eq.~\eqref{eq:app_d2_geq_kl}, and using the monotonicity of the exponential:
\begin{equation}
    \mathrm{Var}_Q[\rho]
    = \chi^2(P \| Q)
    = \exp(D_2) - 1
    \;\geq\; \exp(D_{\mathrm{KL}}) - 1.
    \label{eq:app_var_lower_bound}
\end{equation}

\subsection{Effective Sample Size Decay}
\label{app:ess_derivation}

The effective sample size (ESS)~\citep{kong1992note} quantifies how many of $n$ importance-weighted samples are effectively contributing to an estimate. Formally:
\begin{equation}
    \mathrm{ESS} = \frac{n}{1 + \mathrm{Var}_{Q}[\rho]}.
    \label{eq:app_ess_def}
\end{equation}
When $\pi_\theta = \pi_\mu$ (i.e., $\rho \equiv 1$, $\mathrm{Var}[\rho] = 0$), all samples are equally useful and $\mathrm{ESS} = n$. As $\mathrm{Var}[\rho]$ grows, more samples are effectively wasted and the ESS shrinks.

Substituting the lower bound from Eq.~\eqref{eq:app_var_lower_bound}:
\begin{equation}
    \mathrm{ESS}
    = \frac{n}{1 + \mathrm{Var}[\rho]}
    \leq \frac{n}{1 + \exp(D_{\mathrm{KL}}) - 1}
    = \frac{n}{\exp(D_{\mathrm{KL}})}.
    \label{eq:app_ess_step1}
\end{equation}
This shows that per-sample effective contribution decays exponentially with the KL divergence between the two policies.

\section{Environment and Model Details}
\label{app:env}

Table~\ref{tab:env_details} summarizes the eight evaluation environments. We use three model scales across experiments, chosen to match the computational cost of each task.

\begin{table}[ht]
\scriptsize
\caption{Environment and model configurations. ``Max Actions'' denotes the maximum number of assistant turns per episode. ``Seq Len'' is the maximum sequence length (prompt + all turns). ``Config'' refers to the hyperparameter configuration (\cref{app:config_a}).}
\label{tab:env_details}
\begin{tabular}{llcccccl}
    \toprule
    \textbf{Environment} & \textbf{Type} & \textbf{Model} & \textbf{GPUs} & \textbf{Seq Len} & \textbf{Max Actions} & \textbf{Config} & \textbf{Reward} \\
    \midrule
    NQ Search & LLM & Qwen2.5-7B & 8 & 12800 & 5 & Default & Binary (EM) \\
    AIME & LLM & Qwen2.5-7B & 8 & 4096 & 3 & A & Binary \\
    Sokoban Simple & LLM & Qwen2.5-0.5B & 2 & 2048 & 10 & A & $[-1, +3]$ \\
    Sokoban Hard & LLM & Qwen2.5-0.5B & 2 & 2048 & 10 & A & $[-1, +3]$ \\
    FrozenLake (LLM) & LLM & Qwen2.5-0.5B & 2 & 2048 & 10 & A & Binary \\
    CliffWalking & LLM & Qwen2.5-0.5B & 2 & 2048 & 200 & A & $[-\infty, 0]$ \\
    GSM8K & LLM & Qwen2.5-0.5B & 2 & 4096 & 3 & A & Binary \\
    \midrule
    FrozenLake (VLM) & VLM & Qwen2.5-VL-3B & 4 & 4096 & 10 & Default & Binary \\
    GeoQA & VLM & Qwen2.5-VL-3B & 4 & 4096 & 3 & Default & Binary \\
    \bottomrule
\end{tabular}
\end{table}

\paragraph{NQ Search.}
An agentic retrieval-augmented QA task on Natural Questions~\citep{jin2025searchr1}. The agent alternates between reasoning (\texttt{<think>}), issuing search queries (\texttt{<search>}), and producing a final answer (\texttt{<answer>}). Retrieved documents are provided via a FAISS index with E5 embeddings ($k{=}3$ passages per query). Reward is binary exact-match (EM), with case/punctuation/article normalization. The long sequence length (12800 tokens) accommodates multi-turn search context.

\paragraph{AIME.}
A math competition task based on the American Invitational Mathematics Examination. Each problem requires an integer answer in the range 000--999. The agent has up to 3 attempts per problem; if the first answer is incorrect, it receives feedback and may try again. Reward is binary (correct/incorrect). We use Qwen2.5-7B-Instruct with Config~A ($\tau{=}1000$) on 8 GPUs.

\paragraph{Sokoban.}
A box-pushing puzzle on a grid where the agent must push all boxes onto target positions~\citep{wang2025roll}. Actions are irreversible---a misplaced box cannot be pulled back. We evaluate two difficulty levels: \emph{Simple} ($6{\times}6$ grid, 1~box) and \emph{Hard} (larger layouts, more boxes). Score ranges from $-1$ (all boxes misplaced) to $+3$ (all on target), making it a dense reward signal with a meaningful performance gradient.

\paragraph{FrozenLake (LLM).}
A text-only navigation task on a $4{\times}4$ grid with slippery ice. The agent receives a text description of the grid state and must navigate to the goal while avoiding holes. Slippery dynamics mean the agent moves in the intended direction with probability $1/3$ and slides sideways otherwise, making the task stochastic. Used for both main evaluation and ablation studies.

\paragraph{CliffWalking.}
A $4{\times}12$ grid navigation task where the agent must reach the goal while avoiding a cliff along the bottom row. Falling off the cliff incurs a large negative reward ($-100$) and resets the agent. The optimal policy achieves score~0. This serves as a control: the task is simple enough that on-policy training solves it quickly, and replay is not expected to help.

\paragraph{GSM8K.}
Grade-school math word problems~\citep{cobbe2021gsm8k} requiring multi-step arithmetic reasoning. The base Qwen2.5-0.5B model already achieves ${>}$93\% accuracy on this task, so it serves as a near-saturated control where replay cannot provide meaningful improvements.

\paragraph{FrozenLake (VLM).}
The visual counterpart of the text FrozenLake: the $4{\times}4$ grid is rendered as an RGB image, and the VLM agent must interpret the visual state to navigate. This tests whether \method{} generalizes to multimodal settings where observations are images rather than text.

\paragraph{GeoQA.}
A geometry question-answering task where the VLM interprets geometric diagrams and solves problems requiring spatial reasoning. Binary reward (correct/incorrect).

\section{Engineering Insights: Hyperparameter Configuration for Off-Policy LLM RL}
\label{app:config_a}

During development, we discovered that standard on-policy hyperparameters are unsuitable for off-policy training with a replay buffer. We summarize the key findings below, as they may benefit practitioners adopting experience replay for LLM RL.

\paragraph{Advantage Clipping is Critical.}
The default advantage clipping threshold in many LLM RL implementations is large ($\epsilon_{\text{clip}} = 20$), effectively disabling clipping. For on-policy training this is harmless, but with replay data, stale trajectories can produce extreme advantage values that destabilize training. We found that reducing the clip to $\epsilon_{\text{clip}} = 0.2$---the same value used for the PPO ratio clip---was essential for stable off-policy learning. With $\epsilon_{\text{clip}} = 20$, replay-augmented training frequently diverged; with $\epsilon_{\text{clip}} = 0.2$, it was consistently stable across all environments.

\paragraph{KL Regularization Should Be Disabled.}
On-policy methods commonly use KL divergence penalties ($\beta_{\text{KL}} = 0.05$--$0.1$) to prevent the policy from deviating too far from the reference model. However, with off-policy replay, the KL penalty interacts poorly with stale data: old trajectories already have large KL divergence from the reference, and penalizing this further suppresses useful gradient signal from replay. We found that setting $\beta_{\text{KL}} = 0$ (and disabling the adaptive KL controller) improved performance across all environments when using a replay buffer.

\paragraph{Entropy Bonus is Unnecessary.}
Similarly, the entropy loss coefficient ($\lambda_{\text{ent}} = 0.01$) commonly used for exploration in on-policy training was counterproductive with replay. The replay buffer itself provides exploration diversity; adding an entropy bonus on top led to overly stochastic policies.

Table~\ref{tab:config_comparison} summarizes these differences. We refer to the off-policy-optimized configuration as ``Config~A'' throughout our experiments.

\begin{table}[ht]
\centering
\caption{Hyperparameter comparison between standard on-policy settings and our off-policy-optimized Config~A.}
\label{tab:config_comparison}
\small
\begin{tabular}{lcc}
    \toprule
    \textbf{Parameter} & \textbf{On-Policy (Default)} & \textbf{Config~A (Off-Policy)} \\
    \midrule
    Advantage clip $\epsilon_{\text{clip}}$ & 20 & \textbf{0.2} \\
    KL loss enabled & \checkmark & $\times$ \\
    KL coefficient $\beta_{\text{KL}}$ & 0.05--0.1 & \textbf{0.0} \\
    Init KL coefficient & 0.1 & \textbf{0.0} \\
    Entropy coefficient $\lambda_{\text{ent}}$ & 0.01 & \textbf{0.0} \\
    PPO epochs & 1 & 1 \\
    Max gradient norm & 1.0 & 1.0 \\
    Advantage estimator & REINFORCE++ & REINFORCE++ \\
    Whiten advantages & \checkmark & \checkmark \\
    \bottomrule
\end{tabular}
\end{table}

\noindent\textbf{Note}: Config~A was developed on FrozenLake (LLM, 0.5B) and subsequently validated on Sokoban, CliffWalking, and GSM8K without modification---all 0.5B experiments in this paper use Config~A for both baseline and replay runs. The NQ Search (7B) and VLM (3B) experiments use the default on-policy settings, as larger models appear more robust to these hyperparameter choices.

\section{Detailed Experimental Configurations}
\label{app:configs}

Table~\ref{tab:training_config} lists the full training configuration for each experiment family. All experiments use the ROLL framework~\citep{wang2025roll} with REINFORCE++ as the policy gradient algorithm, DeepSpeed for training, and vLLM for inference.

\begin{table}[ht]
\centering
\caption{Training configurations across experiment families. ``GA'' = gradient accumulation steps, ``BS/dev'' = per-device batch size. All experiments use learning rate $10^{-6}$.}
\label{tab:training_config}
\small
\begin{tabular}{lccccccc}
    \toprule
    \textbf{Experiment} & \textbf{Model} & \textbf{GPUs} & \textbf{BS/dev} & \textbf{GA} & \textbf{DeepSpeed} & \textbf{Max Steps} & \textbf{Config} \\
    \midrule
    NQ Search & 7B & 8 & 1 & 16 & ZeRO-3 & 300 & Default \\
    AIME & 7B & 8 & 2 & 32 & ZeRO-2 & 400 & A \\
    Sokoban & 0.5B & 2 & 4 & 32 & ZeRO-2 & 400 & A \\
    FrozenLake (LLM) & 0.5B & 2 & 4 & 32 & ZeRO-2 & 400 & A \\
    CliffWalking & 0.5B & 2 & 4 & 32 & ZeRO-2 & 400 & A \\
    GSM8K & 0.5B & 2 & 4 & 32 & ZeRO-2 & 400 & A \\
    \midrule
    FrozenLake (VLM) & VL-3B & 4 & 1 & 32 & ZeRO-2 & 100 & Default \\
    GeoQA & VL-3B & 4 & 1 & 32 & ZeRO-2 & 100 & Default \\
    \bottomrule
\end{tabular}
\end{table}

Table~\ref{tab:replay_config} details the replay buffer configuration. All replay-enabled experiments use a trajectory-level buffer with FIFO eviction and the same core priority parameters.

\begin{table}[ht]
\centering
\caption{Replay buffer configurations. All experiments use trajectory-level sampling, FIFO eviction, and priority exponent $\alpha{=}0.6$.}
\label{tab:replay_config}
\small
\begin{tabular}{lccccc}
    \toprule
    \textbf{Method} & \textbf{Capacity} & \textbf{$K$} & \textbf{Priority} & \textbf{Age Decay $\tau$} & \textbf{IS ($\beta$)} \\
    \midrule
    On-Policy & --- & --- & --- & --- & --- \\
    Standard PER & 50K & 2 & $|r| + \epsilon$ & $\infty$ (disabled) & \checkmark~(0.4) \\
    \method{} (default) & 50K & 2 & $|r| + \epsilon$ & 500 & $\times$ \\
    \method{} ($\tau{=}1000$) & 50K & 2 & $|r| + \epsilon$ & 1000 & $\times$ \\
    \method{} ($\tau{=}1500$) & 50K & 2 & $|r| + \epsilon$ & 1500 & $\times$ \\
    \method{} ($\tau{=}500$, IS) & 50K & 2 & $|r| + \epsilon$ & 500 & \checkmark~(0.4) \\
    \bottomrule
\end{tabular}
\end{table}

\paragraph{Common Settings.}
All experiments share: rollout batch size 128, validation batch size 128, seed 42, cosine learning rate schedule with 10-step warmup, BF16 mixed precision, and FlashAttention-2. Inference uses vLLM with $\text{top-}p{=}0.99$, $\text{top-}k{=}100$, and temperature~0.99.

\paragraph{GPU Layout.}
For 2-GPU experiments (0.5B models): GPU~0 runs training (DeepSpeed) and reference model inference; GPU~1 runs vLLM inference. For 4-GPU experiments (VLM 3B): GPUs~0--1 run training and reference; GPUs~2--3 run vLLM. For 8-GPU experiments (NQ Search 7B): GPUs~0--3 run training (ZeRO-3); GPUs~4--7 run vLLM.

\end{document}